%% file: main.tex
\documentclass{article}

\usepackage{fullpage}
\usepackage{amsmath,amssymb,amsthm,mathtools}
\usepackage{booktabs,tabularx,array}
\usepackage{graphicx}
\usepackage{flafter}
\usepackage{float}
\usepackage{placeins}
\usepackage{microtype}
\usepackage{xcolor}
\usepackage{url}
\usepackage{natbib}
\usepackage{hyperref}

\setcitestyle{authoryear,round,citesep={;},aysep={,},yysep={;}}
\hypersetup{
  colorlinks=true,
  citecolor=blue,
  linkcolor=blue,
  urlcolor=blue,
  pdfauthor={Vicente Opazo, Jose Calatayud-Mateu, Cristobal Rojas, Cristian Buc Calderon},
  pdftitle={When a Flatness Proxy Is Not a Function: Robustness Certificates and Training Interventions}
}

\newtheorem{proposition}{Proposition}
\newtheorem{corollary}{Corollary}
\newcommand{\R}{\mathbb{R}}
\newcommand{\one}{\mathbf{1}}
\newcommand{\Braw}{B_{\mathrm{raw}}}
\newcommand{\Bquot}{B_{\mathrm{quot}}}
\newcommand{\kexact}{\kappa_{\mathrm{exact}}}
\newcommand{\softmax}{\operatorname{softmax}}
\newcommand{\Tr}{\operatorname{Tr}}

\title{When a Flatness Proxy Is Not a Function:\\
Robustness Certificates and Training Interventions}

\author{
  \begin{tabular}{@{}c@{\hspace{2em}}c@{}}
    \begin{tabular}[t]{@{}c@{}}
      Vicente Opazo\\
      \small Centro Nacional de Inteligencia\\
      \small Artificial (CENIA)\\
      \small\texttt{vicente.opazo@cenia.cl}
    \end{tabular}
    &
    \begin{tabular}[t]{@{}c@{}}
      Jose Calatayud-Mateu\\
      \small Barcelona Supercomputing\\
      \small Center (BSC)\\
      \small\texttt{jose.calatayud@bsc.es}
    \end{tabular}\\\noalign{\vskip 1em}
    \begin{tabular}[t]{@{}c@{}}
      Cristobal Rojas\\
      \small Pontificia Universidad\\
      \small Cat\'olica de Chile\\
      \small\texttt{luis.rojas@uc.cl}
    \end{tabular}
    &
    \begin{tabular}[t]{@{}c@{}}
      Cristian Buc Calderon\\
      \small Centro Nacional de Inteligencia\\
      \small Artificial (CENIA)\\
      \small\texttt{cristian.buc@cenia.cl}
    \end{tabular}
  \end{tabular}
}
\date{}

\begin{document}

\maketitle

\begin{abstract}
    A valid curvature upper bound need not justify either a robustness certificate or an intervention on an intrinsic predictor property. We demonstrate this distinction for a last-layer relative-flatness proxy used in both settings. First, empirical-risk stationarity does not eliminate pointwise first-order loss terms: at a finite global empirical-risk minimum, the retained certificate expression underestimates a loss increase by over $210\times$. We derive a globally valid, gauge-invariant feature-space repair. Second, common-row softmax shifts preserve predictions and the exact contraction while making the proxy unbounded. Even standard reference-class choices double it on average relative to the centered representation. For a single fixed-feature example with at least three classes, scalar retuning generically cannot align the induced probability updates. Row centering gives the orbit-minimized bound and restores value and full-model gradient invariance under this symmetry. Across 45 paired one-step tests on algorithmic and image models, amplified shifts separate raw-regularized predictors while quotient-regularized predictors remain aligned. Long-horizon CIFAR-10 experiments show substantial, reversible suppression of generalization, while evidence for selective delay after memorization is less consistent. Together, these results show that validity as a curvature upper bound does not by itself justify either inversion into a robustness certificate or differentiation into an intrinsic training intervention.
\end{abstract}

\section{Introduction}

Flatness connects local loss geometry to stability: near a stationary solution, curvature controls the leading response to small parameter perturbations. This motivates explanations of generalization, sharpness-aware training, and robustness guarantees \citep{tsuzuku2020normalized,petzka2021relative,foret2021sharpness}. Using flatness as a regularizer or certificate, however, requires more than a valid curvature upper bound.

\emph{Relative} flatness was proposed to address one important form of coordinate dependence by contracting curvature with layer scale \citep{petzka2021relative}. In practice, recent work has operationalized it through a tractable upper bound on the last-layer contraction. This same proxy underlies the two uses introduced above: it was inverted into an adversarial-robustness certificate \citep{walter2026flatness} and differentiated as a regularizer \citep{han2025flatness} to test whether flatness causally affects generalization during grokking, a setting that temporally separates memorization from generalization \citep{power2022grokking}.

The difficulty is that flatness measures can change under reparameterizations that leave the predictor fixed \citep{dinh2017sharp}. For operational uses, identification and soundness are distinct: a parameterization-dependent certificate can still be valid, whereas a representation-independent training intervention requires the same functional update across equivalent representatives. We therefore distinguish gauge dependence from invalid certification, and parameter-specific training effects from interventions on intrinsic predictor properties.

Softmax classifiers have a simple function-preserving symmetry: adding the same vector to every classifier row introduces a common logit offset and leaves the represented function unchanged. The exact relative-flatness contraction respects this symmetry, but the operational proxy does not. As a result, a certificate derived from the proxy depends on parameterization, while regularizing it can send function-equivalent models in different functional directions. In this precise sense, the proxy is not a function of the predictor. Independently, the certificate derivation drops a pointwise first-order term that dataset stationarity does not eliminate.

The key point is that approximation validity does not automatically justify downstream use. The same valid upper bound is inverted into a robustness radius and differentiated into a training objective, exposing distinct failure modes.

Our main contributions are:

\begin{itemize}
    \item \textbf{Certification.} We show that empirical-risk stationarity does not remove pointwise first-order loss terms. A finite global-minimum counterexample invalidates the resulting curvature-only certificate by over $210\times$, and we derive a globally valid, gauge-invariant feature-space bound.

    \item \textbf{Intervention.} We show that the raw proxy is unbounded within a softmax function-equivalence class even though predictions and the exact contraction are unchanged. Row centering yields its orbit-minimized quotient, and scalar retuning generically cannot align the induced functional updates for a fixed-feature multiclass example with $K \geq 3$.

    \item \textbf{Empirical evidence.} Across 45 paired one-step tests on algorithmic and image models, function-equivalent parameterizations produce different updates under the raw proxy while quotient-based interventions remain aligned. Long-horizon CIFAR-10 experiments show substantial suppression and recovery, with less consistent evidence for selective delay after memorization.
\end{itemize}

\section{Preliminaries}
\label{sec:claims}

\subsection{Two operational uses of the proxy}

We study two downstream uses of the same relative-flatness proxy. \citet{walter2026flatness} invert it into a robustness radius, where validity requires a pointwise loss bound. \citet{han2025flatness} differentiate it as a regularizer, where an intrinsic interpretation requires equivalent predictors to receive the same functional update. These uses expose distinct requirements that are not implied by the validity of the underlying curvature upper bound.

\subsection{Relative flatness and functional identification}

\paragraph{The operational relative-flatness proxy.}

Consider a $K$-class model with feature $h\in\R^d$, classifier $W\in\R^{K\times d}$, logits $z=Wh$, and probabilities $p=\softmax(z)$. The matrix $P(p)=\operatorname{Diag}(p)-pp^\top$ is both the covariance of the one-hot encoding of $Y\sim\operatorname{Cat}(p)$ and the softmax Jacobian $\partial p/\partial z$. For one cross-entropy example, the classifier Hessian is $P(p)\otimes hh^\top$. The last-layer relative-flatness contraction is
\begin{equation}
	\kexact(W,h)=\lVert h\rVert_2^2\Tr\!\left(W^\top P(p)W\right).
	\label{eq:exact}
\end{equation}
Because $P(p)$ is positive semidefinite, a trace inequality gives
\begin{align}
	\kexact(W,h)
	 & \leq \lVert W\rVert_F^2\lVert h\rVert_2^2\Tr P(p)\nonumber \\
	 & =\underbrace{\lVert W\rVert_F^2\lVert h\rVert_2^2
		\left(1-\lVert p\rVert_2^2\right)}_{\Braw(W,h)}.
	\label{eq:raw}
\end{align}
Equation~\eqref{eq:raw} is the operational proxy studied below.

\paragraph{An exact softmax symmetry.}

Before interpreting $\Braw$ as a property of the predictor, consider adding the same vector $v\in\R^d$ to every row of $W$:
\begin{equation}
	G_v(W)=W+\one v^\top.
	\label{eq:gauge}
\end{equation}
Since $G_v(W)h=Wh+(v^\top h)\one$, softmax invariance to common logit shifts implies that $W$ and $G_v(W)$ define the same predictor. Moreover, $P(p)\one=0$ makes $\kexact$ invariant, whereas $\Braw$ generally changes through $\lVert W\rVert_F^2$.

The redundant common-row component can be removed explicitly. Let
\begin{equation}
	\bar w=K^{-1}W^\top\one,
	\qquad
	\Pi=I_K-K^{-1}\one\one^\top,
	\qquad
	W_c=\Pi W=W-\one\bar w^\top.
	\label{eq:centered-classifier}
\end{equation}
Each row of $W_c$ is its original row minus the mean row $\bar w^\top$. The centered classifier is unchanged by $G_v$. Evaluating the same upper-bound expression after this centering defines the quotient proxy
\begin{equation}
	\Bquot(W,h)=\lVert W_c\rVert_F^2\lVert h\rVert_2^2
	\left(1-\lVert p\rVert_2^2\right).
	\label{eq:quotient}
\end{equation}

\paragraph{Three distinct requirements.}

A candidate score $S(W,h)$ can be assessed along three distinct axes:
\begin{enumerate}
	\item \emph{Approximation validity:} if $S$ is used as an upper-bound proxy, does $\kexact(W,h)\leq S(W,h)$ hold?
	\item \emph{Value identification:} do function-equivalent parameters receive the same score, $S(G_v(W),h)=S(W,h)$?
	\item \emph{Interventional identification:} for model output $F(\theta)$, is the Euclidean gradient-flow velocity $\mathcal V_S(\theta)=J_F(\theta)\nabla_\theta S(\theta)$ constant along each orbit generated by $G_v$? This tests changes in predictions rather than score values.
\end{enumerate}
Throughout, identification refers only to the common-row symmetry. For this affine translation, differentiable value invariance implies invariance of the Euclidean gradient field. Section~\ref{sec:proxy} applies this checklist to $\Braw$, $\Bquot$, and $\kexact$.

\subsection{Relation to prior work}

Parameter-space sharpness can change under function-preserving reparameterizations, and several flatness measures address different forms of coordinate dependence \citep{dinh2017sharp,tsuzuku2020normalized,petzka2021relative,jang2022reparam}. Quotienting symmetries is also established \citep{pittorino2022toroids}, while common-shift non-identifiability and its standard identification conventions are classical in multinomial-logit models \citep{hastie2009elements,zahid2009ridge}. Our contribution connects this redundancy to both certification and training, proves that scalar recalibration fails almost everywhere in the fixed-feature multiclass case, and derives the orbit-minimized repair.

\section{One symmetry, two operational failures}
\label{sec:proxy}

The common-row symmetry exposes a shared identification defect in both downstream uses. The following result separates this failure from the certificate's distinct pointwise soundness problem.

\begin{proposition}[Gauge failure and quotient repair]
	\label{prop:nonidentification}
	For every $W,h,v$, the transformation $G_v$ preserves the class-probability vector $p=\softmax(Wh)$ and the exact contraction $\kexact$. If $h\neq0$, $v\neq0$, and $p$ is non-degenerate, then $\Braw(G_{tv}(W),h)\to\infty$ as $|t|\to\infty$. The centered classifier $W_c$ is the unique minimum-Frobenius representative of the orbit. Consequently, the quotient proxy is the orbit-minimized raw bound:
	\begin{equation}
		\Bquot(W,h)=\inf_v\Braw(G_v(W),h),\qquad
		\kexact(W,h)\leq\Bquot(W,h)\leq\Braw(W,h).
		\label{eq:ordering}
	\end{equation}
	Finally, both $\Bquot$ and $\kexact$ have gauge-invariant full-model gradients.
\end{proposition}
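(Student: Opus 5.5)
The plan is to verify the claims in the order they are stated, using only elementary linear algebra. First, invariance: $G_v(W)h=Wh+(v^\top h)\one$, and softmax is invariant to common logit shifts, so $p$ is unchanged and hence so is $P(p)$. Then
\[
\Tr\bigl((W+\one v^\top)^\top P(p)(W+\one v^\top)\bigr)=\Tr(W^\top P(p)W),
\]
because every cross term contains $P(p)\one=0$ or $\one^\top P(p)=0$. This gives invariance of $\kexact$.

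Next, the orthogonal decomposition. Write $W=W_c+\one\bar w^\top$ with $\one^\top W_c=0$. Then $G_v(W)=W_c+\one(\bar w+v)^\top$, and the Frobenius cross term $\Tr(W_c^\top\one(\bar w+v)^\top)$ vanishes because $\one^\top W_c=0$. Hence
\[
\lVert G_v(W)\rVert_F^2=\lVert W_c\rVert_F^2+K\lVert\bar w+v\rVert_2^2 .
\]
This is uniquely minimized at $v=-\bar w$, where the representative is exactly $W_c$, and it grows like $Kt^2\lVert v\rVert^2$ along $G_{tv}$. Since $p$ is fixed on the orbit, the factor $\lVert h\rVert^2(1-\lVert p\rVert^2)$ is a constant. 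It is strictly positive when $h\neq0$ and $p$ is non-degenerate, i.e.\ not a one-hot vector, so that $\lVert p\rVert_2<1$. This gives divergence as $|t|\to\infty$. Multiplying the Frobenius identity by this constant factor gives $\Bquot=\inf_v\Braw(G_v(W),h)$ and $\Bquot\le\Braw$ (take $v=0$). The lower bound $\kexact\le\Bquot$ follows by applying the trace inequality behind \eqref{eq:raw} to the representative $W_c$, using invariance of $\kexact$: $\kexact(W,h)=\kexact(W_c,h)\le\Braw(W_c,h)=\Bquot(W,h)$.

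Finally, the gradients. Both $\Bquot$ and $\kexact$ are value-invariant along orbits, as shown above (for $\Bquot$ because $\Pi G_v(W)=\Pi W$). The gauge acts as a translation $\theta\mapsto\theta+\tau_v$ that touches only the classifier block and leaves the feature parameters unchanged. Differentiating $S(\theta+\tau_v)=S(\theta)$ gives $\nabla S(\theta+\tau_v)=\nabla S(\theta)$, which is the remark made earlier in the paper about affine translations.

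For the full-model statement, I would also check the velocity $J_F\nabla S$. The output $F$ is the probability vector, which is gauge invariant, so $F(\theta+\tau_v)=F(\theta)$ and therefore $J_F(\theta+\tau_v)=J_F(\theta)$. This holds even though the logits themselves change. The main point needing care is differentiability: $\Braw$ and $\Bquot$ involve $h=h(\theta_{\text{feat}})$, and the argument requires the gauge to be an exact translation in the full parameter vector, not just in $W$. This holds because $G_v$ is affine in $W$ and does not touch the feature parameters. If $F$ is instead taken to be the logits, I would note that $J_F\nabla S$ changes only by a common offset, which is irrelevant to predictions.
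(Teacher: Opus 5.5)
Your proof is correct and follows essentially the same route as the paper. You use $P(p)\one=0$ for invariance of $p$ and $\kexact$, and the orthogonal decomposition $W=W_c+\one\bar w^\top$ giving $\lVert G_v(W)\rVert_F^2=\lVert W_c\rVert_F^2+K\lVert\bar w+v\rVert_2^2$. You then apply the trace bound at the centered representative and differentiate the translation-invariance identity to get gradient invariance. Your extra check that $J_F$, and hence the functional velocity, is orbit-invariant (with logit-level velocities differing only by a multiple of $\one$) is a correct, more explicit version of what the paper leaves to its preliminaries.
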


\begin{proof}
	The transformed logits are $Wh+(v^\top h)\one$, which softmax removes, and $P(p)\one=0$ removes every common-row term from Equation~\eqref{eq:exact}. In contrast,
	\begin{equation}
		\lVert W+t\one v^\top\rVert_F^2
		=\lVert W\rVert_F^2+2t\langle\one^\top W,v\rangle+t^2K\lVert v\rVert_2^2.
		\label{eq:orbit-norm}
	\end{equation}
	Writing $\bar w=K^{-1}W^\top\one$ gives the orthogonal decomposition $W=W_c+\one\bar w^\top$ and $\lVert W\rVert_F^2=\lVert W_c\rVert_F^2+K\lVert\bar w\rVert_2^2$. Thus centering uniquely minimizes the raw norm. Since $P(p)=\Pi P(p)\Pi$, centering preserves $\kexact$ before the same trace bound proves Equation~\eqref{eq:ordering}. Differentiating the invariance identities proves gradient invariance for this affine translation.
\end{proof}

These statements hold termwise for dataset and minibatch averages. The raw score can therefore differ across parameterizations of the same softmax function. The quotient removes exactly that coordinate.

This quotient is the minimal identification repair of the operational bound: it takes the orbit infimum while retaining the original upper-bound form. Replacing the proxy by $\kexact$ is a different intervention that changes identification, tightness, objective value, and gradient direction at once. We use both below to separate these questions rather than claim that $\Bquot$ is preferable to the exact contraction.

\paragraph{The effect is already visible under standard parameterizations.}
\label{sec:why-gauge}

Identification requires invariance across the entire equivalence class, so one counterexample is enough to reject it. How often large displacements occur during ordinary training is a separate empirical question and cannot rescue a certificate or intervention that changes under a transformation leaving the function exactly unchanged.

A standard reference-class convention gives a non-amplified stress test. Starting from $W_c$, choose class $r$ as the reference and set
\begin{equation}
	W^{(r)}=W_c-\one w_{c,r}^\top.
	\label{eq:reference-gauge}
\end{equation}
The $r$th row is now zero, giving a standard reference-category parameterization of the same softmax function.

\begin{corollary}[Standard identification conventions]
	\label{cor:reference-gauge}
	For every nonzero centered classifier and every $h\neq0$,
	\begin{equation}
		\frac{\Braw(W^{(r)},h)}{\Bquot(W_c,h)}
		=1+\frac{K\lVert w_{c,r}\rVert_2^2}{\lVert W_c\rVert_F^2},
		\qquad
		\frac1K\sum_{r=1}^K
		\frac{\Braw(W^{(r)},h)}{\Bquot(W_c,h)}=2.
		\label{eq:reference-ratio}
	\end{equation}
\end{corollary}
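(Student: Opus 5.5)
The plan is to reduce the ratio to a comparison of Frobenius norms, using Proposition~\ref{prop:nonidentification}. Since $W^{(r)}=G_{-w_{c,r}}(W_c)$ lies in the same orbit as $W_c$, the probability vector $p=\softmax(W^{(r)}h)=\softmax(W_ch)$ is unchanged. The factors $\lVert h\rVert_2^2(1-\lVert p\rVert_2^2)$ in $\Braw(W^{(r)},h)$ and $\Bquot(W_c,h)$ therefore coincide. Centering $W_c$ returns $W_c$, so $\Bquot(W_c,h)=\lVert W_c\rVert_F^2\lVert h\rVert_2^2(1-\lVert p\rVert_2^2)$. Because $h\neq0$ and $p$ is a probability vector, this common factor is positive whenever $p$ is non-degenerate ($1-\lVert p\rVert_2^2>0$ holds for any softmax output with $K\ge2$). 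It cancels, and the ratio equals $\lVert W^{(r)}\rVert_F^2/\lVert W_c\rVert_F^2$, which is well defined since $W_c\neq0$.

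Next I will expand $\lVert W_c-\one w_{c,r}^\top\rVert_F^2$ with Equation~\eqref{eq:orbit-norm}, taking $t=-1$ and $v=w_{c,r}$. The cross term is $-2\langle\one^\top W_c,w_{c,r}\rangle$. It vanishes because $W_c$ is centered, so $\one^\top W_c=0$; this is the orthogonality already used in the proof of Proposition~\ref{prop:nonidentification}. The quadratic term is $K\lVert w_{c,r}\rVert_2^2$. This gives $\lVert W^{(r)}\rVert_F^2=\lVert W_c\rVert_F^2+K\lVert w_{c,r}\rVert_2^2$, and dividing by $\lVert W_c\rVert_F^2$ yields the first identity.

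For the average over reference classes, I will use $\sum_{r=1}^K\lVert w_{c,r}\rVert_2^2=\lVert W_c\rVert_F^2$, since the rows of $W_c$ are exactly the $w_{c,r}^\top$. Averaging the first identity then gives $1+\frac1K\cdot K\lVert W_c\rVert_F^2/\lVert W_c\rVert_F^2=2$.

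The only delicate point is the nondegeneracy of the cancelled factor. The statement assumes only $h\neq0$, so I will note explicitly that a softmax output always has full support, which makes $1-\lVert p\rVert_2^2>0$ automatic. Everything else is direct algebra.
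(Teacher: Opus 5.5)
Your proposal is correct and follows essentially the same route as the paper. You cancel the gauge-invariant factor $\lVert h\rVert_2^2(1-\lVert p\rVert_2^2)$, use $\one^\top W_c=0$ to get $\lVert W^{(r)}\rVert_F^2=\lVert W_c\rVert_F^2+K\lVert w_{c,r}\rVert_2^2$, and average using $\sum_r\lVert w_{c,r}\rVert_2^2=\lVert W_c\rVert_F^2$; your explicit check that the cancelled factor is positive is a detail the paper leaves implicit (softmax has full support, and a nonzero centered classifier already forces $K\geq2$).
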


\begin{proof}
	The centered and common-row components in Equation~\eqref{eq:reference-gauge} are orthogonal, so $\lVert W^{(r)}\rVert_F^2=\lVert W_c\rVert_F^2+ K\lVert w_{c,r}\rVert_2^2$. Averaging the second term over rows contributes exactly one additional copy of $\lVert W_c\rVert_F^2$.
\end{proof}

Thus the failure is not confined to arbitrarily amplified gauge shifts: ordinary reference-class conventions change the proxy by a factor of two on average while leaving the predictor unchanged.

\section{From flatness bounds to robustness certificates}
\label{sec:robustness}

Gauge dependence makes the reported radius representation-dependent, but does not by itself establish unsoundness. The soundness failure comes from a separate pointwise first-order omission. Independently, Equation 5 of \citet{walter2026flatness} includes the pointwise term $\Delta\lVert W\rVert_F\lVert\nabla_W\ell_x(W)\rVert_F$. Proposition 8 can omit it at a pointwise minimum, although softmax cross-entropy has no such finite minimizer when $\lVert\phi(x)\rVert_2>0$. Proposition 9, however, assumes only an empirical-risk minimum and applies Proposition 8 to each example. This is invalid because $\nabla_W R_S(W)=0\not\Rightarrow\nabla_W\ell_x(W)=0$ for every $x\in S$.

\begin{proposition}[First-order obstruction]
	\label{prop:first-order-obstruction}
	Let $\ell$ be differentiable at $h$ with $g_h=\nabla_h\ell(h)\neq0$. For $d_\rho=\rho g_h/\lVert g_h\rVert_2$,
	\begin{equation}
		\ell(h+d_\rho)-\ell(h)=\rho\lVert g_h\rVert_2+o(\rho).
	\end{equation}
	Hence no nonnegative $q(\rho)=o(\rho)$, including any fixed quadratic-plus-cubic expression, upper-bounds every pointwise loss increase near $h$.
\end{proposition}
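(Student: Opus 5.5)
The plan is to apply the definition of differentiability directly along the normalized gradient direction, and then argue by contradiction for the ``hence'' clause.

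\emph{Step 1 (first-order expansion).} Differentiability of $\ell$ at $h$ means
\[
\ell(h+u)=\ell(h)+\langle g_h,u\rangle+r(u),\qquad r(u)=o(\lVert u\rVert_2).
\]
Substitute $u=d_\rho$. Since $\lVert d_\rho\rVert_2=\rho$ and $\langle g_h,d_\rho\rangle=\rho\lVert g_h\rVert_2$, the displayed identity follows, with remainder $r(d_\rho)=o(\rho)$ as $\rho\to0^+$. The direction is fixed, so no uniformity over directions is needed.

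\emph{Step 2 (obstruction).} Suppose a nonnegative $q(\rho)=o(\rho)$ satisfied $\ell(h+d)-\ell(h)\le q(\lVert d\rVert_2)$ for every small displacement $d$. Taking $d=d_\rho$ and dividing by $\rho$ gives
\[
\lVert g_h\rVert_2+o(1)\le q(\rho)/\rho\to0.
\]
This forces $\lVert g_h\rVert_2\le0$, contradicting $g_h\neq0$. Hence there is a $\rho_0>0$ such that the bound fails for every $\rho\in(0,\rho_0)$.

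\emph{Step 3 (the quadratic-plus-cubic case).} A fixed expression $q(\rho)=a\rho^2+b\rho^3$ with constants $a,b$ independent of $\rho$ satisfies $q(\rho)/\rho=a\rho+b\rho^2\to0$, so it falls under Step 2. This covers the curvature-only certificate expressions.

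The only delicate point is interpretive rather than technical. ``Upper-bounds every pointwise loss increase near $h$'' has to be read as a bound holding for all sufficiently small radii along every direction, in particular along $d_\rho$. The contradiction is then obtained in the limit $\rho\to0^+$, not at any particular $\rho$.
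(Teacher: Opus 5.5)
Your proof is correct and follows the paper's argument: you substitute $d_\rho$ into the first-order expansion that differentiability provides, then observe that the positive linear term $\rho\lVert g_h\rVert_2$ eventually exceeds any $q(\rho)=o(\rho)$, which includes fixed quadratic-plus-cubic expressions. Your explicit division by $\rho$ and your remark that only the single direction $d_\rho$ is needed spell out what the paper states in one line.
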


\begin{proof}
	Differentiability gives $\ell(h+d)=\ell(h)+g_h^\top d+o(\lVert d\rVert_2)$. Substitution of $d_\rho$ yields the display. Its positive linear coefficient eventually exceeds every $q(\rho)=o(\rho)$.
\end{proof}

\begin{proposition}[Finite dataset-minimum counterexample]
	\label{prop:robustness-counterexample}
	Let $K=2$, $d=1$, and $\phi(x)=x$, with the rows of $W$ representing classes $0$ and $1$. On the training multiset $S=[(1,1),(1,1),(1,0)]$, $W_\star=(0,\log2)^\top$ is a global empirical-risk minimizer, unique up to common-row shifts. For $y=1$ at $x=1$ and the perturbation $1\mapsto0.99$, the pointwise loss increase is $2.32\times10^{-3}$ while the published quadratic-plus-cubic expression is $1.10\times10^{-5}$: a violation by more than $210\times$.
\end{proposition}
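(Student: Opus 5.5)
The plan is to reduce everything to the scalar logit margin and then evaluate both sides numerically. With $K=2$, $d=1$ and $\phi(x)=x$, write $W=(w_0,w_1)^\top$ and $u=w_1-w_0$. Every per-example loss depends on $W$ only through $u x$: for label $1$ it is $-\log\sigma(ux)$, and for label $0$ it is $-\log(1-\sigma(ux))$, where $\sigma$ is the logistic function.

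\emph{Minimizer and uniqueness.} All three training points have $x=1$, so the empirical risk on $S$ is
\[
R_S(u)=-\tfrac23\log\sigma(u)-\tfrac13\log(1-\sigma(u)).
\]
This is the cross-entropy of a Bernoulli$(2/3)$ target against $\sigma(u)$. It is strictly convex in $u$ and is minimized exactly when $\sigma(u)=2/3$, that is, at $u=\log 2$. The point $W_\star=(0,\log2)^\top$ attains this value, so it is a global minimizer.

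For uniqueness, note that $d=1$ means a common-row shift $G_v$ changes $(w_0,w_1)$ by $(v,v)$. The fibres of $W\mapsto u$ are therefore exactly the gauge orbits of Proposition~\ref{prop:nonidentification}, and the minimizer is unique up to common-row shifts.

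\emph{Pointwise loss is not stationary.} I will check that the example $(1,1)$ has a nonzero gradient at $W_\star$: $\partial_x[-\log\sigma(ux)]=-(1-\sigma(u))u=-\tfrac13\log 2\neq0$. This is the situation of Proposition~\ref{prop:first-order-obstruction}. The empirical gradient vanishes only because the $(1,0)$ example contributes the opposite sign.

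\emph{The two numbers.} The actual increase is
\[
\log(1+2^{-0.99})-\log(1+2^{-1}).
\]
By first order it should be about $0.01\cdot\tfrac13\log2\approx2.31\times10^{-3}$. I will evaluate it exactly to confirm $2.32\times10^{-3}$ to the stated precision.

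For the retained curvature-only expression, I compute the ingredients at $W_\star$ with $h=1$:
\begin{itemize}
\item $p=(1/3,2/3)$;
\item $P(p)=\tfrac29\begin{pmatrix}1&-1\\-1&1\end{pmatrix}$;
\item $\kexact=\tfrac29(\log2)^2\approx0.107$;
\item $\lVert W_\star\rVert_F^2=(\log2)^2$ and $1-\lVert p\rVert_2^2=4/9$, giving $\Braw$.
\end{itemize}
I then substitute these, together with the perturbation size $\Delta=0.01$, into the quadratic-plus-cubic formula of \citet{walter2026flatness}, dropping the first-order term as their Proposition~9 does. The quadratic term is of order $\tfrac12\Delta^2\kexact\approx5\times10^{-6}$ to $10^{-5}$, and the cubic term is negligible. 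This should reproduce $1.10\times10^{-5}$. The ratio $2.32\times10^{-3}/1.10\times10^{-5}>210$ then follows.

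\emph{Main obstacle.} The hard step is pinning down exactly which quantities the published expression uses and in what normalization, so that the constant $1.10\times10^{-5}$ is reproduced rather than merely matched in order of magnitude. The open points are:
\begin{itemize}
\item whether it uses $\Braw$ or $\kexact$;
\item whether $\Delta$ scales $\lVert h\rVert$ or enters additively;
\item the precise cubic constant.
\end{itemize}
I would first try $\Braw$ with a $\Delta^2/2$ prefactor, and adjust to the source's conventions if the constant does not match. The qualitative conclusion holds under any of these choices, since the violation is first-order versus second-order in $\Delta$.
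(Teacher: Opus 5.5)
\textbf{Verdict: genuine gap.} Your treatment of the minimizer matches the paper: you reduce to the margin $u=w_1-w_0$, use strict convexity to get the unique root $u=\log 2$, and note that the fibres of $W\mapsto u$ are exactly the common-row orbits. Your evaluation of the actual increase, $\log(1+2^{-0.99})-\log(1+2^{-1})\approx2.32\times10^{-3}$, also matches.

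The gap is the other side of the comparison, which you leave open. The proposition asserts a violation of a specific published certificate. The proof therefore has to (i) exhibit parameters satisfying that certificate's hypotheses and (ii) evaluate its exact formula. The paper takes $\mathcal X=[0.99,1]$, $\delta=0.01$, $K=2$, $m=d=1$, and feature lower bound $r=0.99$. It checks that:
\begin{itemize}
\item $\phi$ is $1$-Lipschitz;
\item $|x|\le1$ and $|\phi(x)|\ge r$ on $\mathcal X$;
\item $\ell_1(1)=\log(3/2)<\log2$.
\end{itemize}
Only then does the published result assert the bound $\frac{\delta^2}{2r^2}\Braw(W_\star,1)+\frac{\delta^3}{24r^3}Km$. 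With $\Braw(W_\star,1)=\frac49(\log2)^2$, this evaluates to $1.089\times10^{-5}+8.6\times10^{-8}\approx1.10\times10^{-5}$. Your proposal never identifies $\mathcal X$, $r$, or $m$ and never checks these conditions. So even a matching number would not show that the example lies within the scope of the result being refuted.

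Your fallback does not close this gap. Your first guess, $\frac12\Delta^2\Braw$ without the $1/r^2$ factor, gives about $1.07\times10^{-5}$ plus an unspecified cubic, so it does not reproduce the stated constant. The discrepancy is exactly the $r^{-2}$ that comes from the feature lower bound. Your claim that the conclusion holds under any choice, because the violation is first-order versus second-order, is only asymptotic; it is essentially Proposition~\ref{prop:first-order-obstruction}. At fixed $\Delta=0.01$, the factor $210$ depends on every constant in the formula. With the correct formula the ratio is about $210.9$, which leaves almost no slack for guessing conventions.
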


\begin{proof}
	Binary softmax depends on $W=(w_0,w_1)^\top$ only through $\beta=w_1-w_0$, so
	\begin{equation}
		R(\beta)=\tfrac23\log(1+e^{-\beta})+
		\tfrac13\log(1+e^\beta),
	\end{equation}
	Since $R'(\beta)=\softmax(0,\beta)_1-2/3$ is strictly increasing, its unique root is $\beta_\star=\log2$. Hence all minimizers are $(c,c+\log2)^\top$, and $W_\star$ chooses $c=0$. Here $p_1=2/3$, so the two class-1 gradients cancel the class-0 gradient in the empirical average although none vanishes. Take $\mathcal X=[0.99,1]$, $\delta=0.01$, $K=2$, $m=d=1$, and $r=0.99$. These choices satisfy $\phi$ $1$-Lipschitz, $|x|\leq1$, $|\phi(x)|\geq r$, and $\ell_1(1)=\log(3/2)<\log2$. For the class-1 example,
	\begin{align}
		\ell_1(0.99)-\ell_1(1)
		 & =\log(1+2^{-0.99})-\log(1+2^{-1})
		\approx2.32\times10^{-3},            \\
		\frac{\delta^2}{2r^2}\Braw(W_\star,1)+\frac{\delta^3}{24r^3}Km
		 & \approx1.10\times10^{-5}.
	\end{align}
	Their ratio exceeds $210$.
\end{proof}

\begin{corollary}[False certified radius]
	\label{cor:false-radius}
	For loss tolerance $\epsilon=10^{-4}$, the published expression gives the certified radius $\delta_{\rm pub}=2.99\times10^{-2}$, yet the perturbation in Proposition~\ref{prop:robustness-counterexample} has norm $0.01<\delta_{\rm pub}$ and increases the loss by $2.32\times10^{-3}>\epsilon$, so it lies inside the certified ball while violating its guarantee. Since $\phi(x)=x$ and $p_y=2/3>1/2$, this is an input-space counterexample satisfying the stated high-confidence condition.
\end{corollary}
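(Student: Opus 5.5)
The plan is to reduce the corollary to the numbers already computed in Proposition~\ref{prop:robustness-counterexample}, using monotonicity of the published bound. Keep the setting of that proof fixed: $\mathcal X=[0.99,1]$, $K=2$, $m=d=1$, $r=0.99$, $W_\star=(0,\log2)^\top$, $x=1$, $y=1$. Define the published certificate function
\[
f(\delta)=\frac{\delta^2}{2r^2}\Braw(W_\star,1)+\frac{\delta^3}{24r^3}Km .
\]
Both coefficients are positive, so $f$ is continuous and strictly increasing on $[0,\infty)$, with $f(0)=0$. The published radius at tolerance $\epsilon$ is therefore the unique root $\delta_{\rm pub}$ of $f(\delta)=\epsilon$, and $f(\delta)\le\epsilon$ holds exactly for $\delta\le\delta_{\rm pub}$.

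\textbf{Step 1: evaluate $\Braw(W_\star,1)$.} We have $\lVert W_\star\rVert_F^2=(\log2)^2$ and $\lVert h\rVert_2^2=1$. Since $p=(1/3,2/3)$, we get $1-\lVert p\rVert_2^2=1-5/9=4/9$. Hence $\Braw(W_\star,1)=\tfrac49(\log2)^2\approx0.2135$.

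\textbf{Step 2: solve for $\delta_{\rm pub}$.} Substituting $r=0.99$ gives
\[
f(\delta)\approx 0.1089\,\delta^2+0.0859\,\delta^3 .
\]
Solving $f(\delta)=10^{-4}$ numerically gives $\delta_{\rm pub}\approx2.99\times10^{-2}$. As a check, at $\delta=0.0299$ the quadratic term is about $9.74\times10^{-5}$ and the cubic term about $2.3\times10^{-6}$, for a total of about $1.00\times10^{-4}$.

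The containment claim does not depend on this numerical root. Proposition~\ref{prop:robustness-counterexample} already gives $f(0.01)\approx1.10\times10^{-5}<\epsilon$. Strict monotonicity then yields $0.01<\delta_{\rm pub}$ directly.

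\textbf{Step 3: the violation.} Proposition~\ref{prop:robustness-counterexample} gives the actual loss increase $\ell_1(0.99)-\ell_1(1)\approx2.32\times10^{-3}$, which exceeds $\epsilon=10^{-4}$. So a perturbation inside the certified ball violates the certified tolerance.

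\textbf{Step 4: input-space interpretation and hypotheses.} Because $\phi(x)=x$, the input perturbation $1\mapsto0.99$ has norm $0.01$ and coincides with the feature perturbation. Both points lie in $\mathcal X$, so the input domain constraint holds. The model predicts class $1$ with $p_1=2/3>1/2$, which meets the high-confidence condition, and the hypotheses listed in the proof of Proposition~\ref{prop:robustness-counterexample} carry over unchanged.

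The only delicate point is Step 2. It requires fixing exactly how the published radius is defined, namely as the root (or supremum) of the curvature-plus-cubic expression at fixed $r$, and checking that the cubic term's constant is applied consistently. I would handle this with the monotonicity argument above, so the corollary does not rely on the numerical root being precise.
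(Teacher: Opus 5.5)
Your proposal is correct and follows the route the paper takes implicitly: evaluate $\Braw(W_\star,1)=\tfrac49(\log 2)^2$, invert the published quadratic-plus-cubic expression at $\epsilon=10^{-4}$ to obtain $\delta_{\rm pub}\approx 2.99\times10^{-2}$, and compare against the loss increase and bound value from Proposition~\ref{prop:robustness-counterexample}. Your monotonicity observation is a useful refinement, because $f(0.01)\approx1.10\times10^{-5}<\epsilon$ already forces $0.01<\delta_{\rm pub}$, so the containment claim does not depend on the precision of the numerical root.
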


\begin{proposition}[A sound, gauge-invariant feature-space repair]
	\label{prop:functional-robustness}
	The two failures have a common repair: retain the first-order term and replace $\Braw$ with gauge-invariant row-difference geometry. For target $y$, let $g_h=\nabla_h\ell_y(h)=W^\top(p-e_y)$ and define $D_W=\max_{i,j}\lVert w_i-w_j\rVert_2$. Then every feature perturbation $d$ satisfies
	\begin{equation}
		\ell_y(h+d)-\ell_y(h)
		\leq \lVert d\rVert_2\lVert g_h\rVert_2
		+\frac{D_W^2}{8}\lVert d\rVert_2^2.
		\label{eq:functional-robustness-bound}
	\end{equation}
	The uniform Hessian constant $D_W^2/4$ is sharp given only $D_W$. Consequently, for loss tolerance $\epsilon>0$ and $D_W>0$, a certified feature radius is
	\begin{equation}
		\rho_{\rm cert}=
		\frac{2\epsilon}{\lVert g_h\rVert_2+
			\sqrt{\lVert g_h\rVert_2^2+D_W^2\epsilon/2}}.
		\label{eq:functional-certified-radius}
	\end{equation}
	If $D_W=0$, take $\rho_{\rm cert}=\infty$. If the feature map is $L$-Lipschitz with $L>0$, the corresponding input-space radius is $\rho_{\rm cert}/L$.
\end{proposition}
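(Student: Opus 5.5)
The plan is to bound the loss along the segment via Taylor's theorem with integral remainder, and then control the Hessian uniformly. Write $\ell_y(h)=\log\sum_k e^{w_k^\top h}-w_y^\top h$. Its feature gradient is $g_h=W^\top(p-e_y)$ and its Hessian is $H(h)=W^\top P(p)W$. For $\phi(s)=\ell_y(h+sd)$ on $[0,1]$,
\[
\phi(1)-\phi(0)=d^\top g_h+\int_0^1(1-s)\,d^\top H(h+sd)\,d\,ds .
\]
Cauchy--Schwarz gives $d^\top g_h\le\lVert d\rVert_2\lVert g_h\rVert_2$. The quadratic term is then at most $\tfrac12\sup_{h'}\lambda_{\max}(W^\top P(p')W)\,\lVert d\rVert_2^2$, so everything reduces to the uniform bound
\[
u^\top W^\top P(p)Wu\le \tfrac{D_W^2}{4}\quad\text{for every unit }u\text{ and every }p .
\]
Once this holds, $\tfrac12\cdot D_W^2/4=D_W^2/8$ gives Equation~\eqref{eq:functional-robustness-bound}.

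\textbf{Key step (expected main obstacle).} Set $a=Wu\in\R^K$, so $a_k=w_k^\top u$. Then $a^\top P(p)a=\operatorname{Var}_{k\sim p}(a_k)$, by the covariance interpretation of $P(p)$ from the preliminaries. The values $a_k$ lie in an interval of length
\[
\max_{i,j}(w_i-w_j)^\top u\le D_W .
\]
Popoviciu's inequality says a random variable supported in an interval of length $L$ has variance at most $L^2/4$. This gives the Hessian bound, and it holds independently of $p$ and hence of $h+sd$. Gauge invariance is immediate: $g_h$ is invariant because $P$ and $(p-e_y)$ satisfy $\one^\top(p-e_y)=0$, so $W^\top(p-e_y)=W_c^\top(p-e_y)$. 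Row differences $w_i-w_j$ are unchanged by $G_v$.

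\textbf{Sharpness.} I would exhibit, for a given $D_W$, a configuration attaining $D_W^2/4$. Take two rows with $\lVert w_i-w_j\rVert_2=D_W$, put all other rows at the midpoint (or take $K=2$), and choose $h$ so that $p_i=p_j=1/2$. With $u$ along $w_i-w_j$, the variance is exactly $D_W^2/4$. Hence no smaller uniform Hessian constant depending only on $D_W$ can work.

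\textbf{Radius.} Solve $\rho\lVert g_h\rVert_2+\tfrac{D_W^2}{8}\rho^2=\epsilon$ for its positive root. The quadratic formula gives
\[
\rho=\frac{-\lVert g_h\rVert_2+\sqrt{\lVert g_h\rVert_2^2+D_W^2\epsilon/2}}{D_W^2/4}.
\]
Rationalizing yields Equation~\eqref{eq:functional-certified-radius}. The left side is increasing in $\rho$, so every $\lVert d\rVert_2\le\rho_{\rm cert}$ has loss increase at most $\epsilon$.

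\textbf{Degenerate case $D_W=0$.} All rows are equal, so the loss is constant at $\log K$. Then $\rho_{\rm cert}=\infty$ is valid.

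\textbf{Input space.} If $\lVert\phi(x')-\phi(x)\rVert_2\le L\lVert x'-x\rVert$, then inputs within $\rho_{\rm cert}/L$ map to features within $\rho_{\rm cert}$. The feature-space bound then applies.
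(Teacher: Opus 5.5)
Your proposal is correct and follows the paper's proof essentially step for step: Taylor's theorem along the segment, reading $u^\top W^\top P(q)Wu$ as the variance of $u^\top w_C$ and bounding it by a quarter of the squared range, Cauchy--Schwarz on the linear term, a balanced binary softmax for sharpness, the quadratic formula for $\rho_{\rm cert}$, and $\one^\top(p-e_y)=0$ for gauge invariance of $g_h$. One small slip is in the sharpness example: for $K>2$ with the remaining rows at the midpoint, $p_i=p_j$ forces all logits to coincide, so $p_i=p_j=1/2$ cannot be attained (you get the uniform distribution and variance only $D_W^2/(2K)$); your $K=2$ alternative is the correct one and is exactly the paper's example.
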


\begin{proof}
	For $t\in[0,1]$, let $q_t=\softmax(W(h+td))$. The feature Hessian at $h+td$ is $W^\top P(q_t)W$, and for every unit $u$,
	\begin{equation}
		u^\top W^\top P(q_t)Wu=\operatorname{Var}_{C\sim q_t}(u^\top w_C)
		\leq\tfrac14(\max_i u^\top w_i-\min_j u^\top w_j)^2\leq D_W^2/4.
	\end{equation}
	The first inequality bounds a scalar variance by one quarter of its squared range. Hence the Hessian operator norm is at most $D_W^2/4$ along the segment. Taylor's theorem gives $\ell_y(h+d)-\ell_y(h)\leq g_h^\top d+(D_W^2/8)\lVert d\rVert_2^2$, and Cauchy--Schwarz yields Equation~\eqref{eq:functional-robustness-bound}. A balanced binary softmax whose rows differ by $D_Wu$ attains the Hessian bound, so the constant is sharp. Setting the right-hand side of Equation~\eqref{eq:functional-robustness-bound} equal to $\epsilon$ and solving for $\lVert d\rVert_2$ gives Equation~\eqref{eq:functional-certified-radius}. Finally, under $G_v(W)=W+\one v^\top$, both $p$ and $D_W$ are unchanged, while $G_v(W)^\top(p-e_y)=g_h+v\one^\top(p-e_y)=g_h$.
\end{proof}

\paragraph{The first-order obstruction persists in learned representations. }

Equation~\eqref{eq:functional-robustness-bound} is a global certificate for arbitrary feature-space perturbations and needs no stationarity, feature lower bound, bounded input, or third-order remainder. It becomes an input-space certificate only with an independently justified Lipschitz bound for the feature map. We therefore evaluate it directly in feature space, asking whether the omitted term is material on learned representations. For each of five trained CIFAR-10 ResNets and a common set of 2,048 test images, we perturb the penultimate feature along $u=g_h/\lVert g_h\rVert_2$ by $d=\Delta\lVert h\rVert_2u$.

\begin{table}[H]
	\caption{\textbf{First-order obstruction on learned representations.} Results pooled over 10,240 model--image pairs. Actual $>$ raw is the fraction exceeding the retained quadratic. Linear/raw measures the omitted term relative to that quadratic, and bound/actual measures the tightness of Equation~(14), with 1 denoting equality. Ratios are medians.}
	\label{tab:e034}
	\centering
	\small
	\input{tables/robustness.tex}
\end{table}

The learned-feature results match the analysis. Through $\Delta=0.003$, the realized loss increase exceeds the retained raw quadratic for every model--image pair. At $\Delta=0.001$, the median linear/raw ratio is $10.60$, following the predicted $1/\Delta$ scaling. The repaired bound covers all 71,680 responses and is tight locally, with median and 90th-percentile bound/actual ratios of $1.039$ and $1.093$ at $\Delta=10^{-5}$.

At tolerance $0.10$, the radius from raw curvature alone admits an above-tolerance loss increase for $24.43\%$ of pairs. A matched common-row shift multiplies the raw quadratic by $100\times$, while probabilities and loss responses change by at most $3.8\times10^{-14}$. The analytic counterexample supplies the logical refutation, while these learned-feature results show that both mechanisms remain material on trained models.

\section{From flatness proxies to training interventions}
\label{sec:causal-use}

For an intrinsic interpretation, a proxy-based training intervention should assign the same functional update to common-row-equivalent models. The next proposition shows that $\Braw$ fails this requirement.

\begin{proposition}[Functional-update failure]
	\label{prop:first-order-failure}
	Fix $h$, write $s(W,h)=\lVert h\rVert_2^2(1-\lVert p\rVert_2^2)$, let $W'=G_v(W)$, and define $\Delta_v(W)=\lVert W'\rVert_F^2-\lVert W\rVert_F^2$. Then
	\begin{equation}
		\nabla_W\Braw(W',h)-\nabla_W\Braw(W,h)
		=2s\one v^\top+\Delta_v(W)\nabla_Ws(W,h).
		\label{eq:raw-gradient-difference}
	\end{equation}
	The first term is function-null, but the remaining probability velocity is
	\begin{equation}
		J_Wp\,\mathrm{vec}(\nabla_W\Braw(W',h)-\nabla_W\Braw(W,h))
		=-2\Delta_v(W)\lVert h\rVert_2^4P(p)^2p.
		\label{eq:raw-functional-difference}
	\end{equation}
	For finite logits and $h\neq0$, the raw intervention therefore fails exactly when $\Delta_v(W)\neq0$ and $p$ is not uniform.
\end{proposition}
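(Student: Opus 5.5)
The plan is to exploit the product structure $\Braw(W,h)=\lVert W\rVert_F^2\, s(W,h)$ together with the fact, already established in Proposition~\ref{prop:nonidentification}, that $s$ depends on $W$ only through $p=\softmax(Wh)$. That makes $s$ exactly invariant under $G_v$. Because $G_v$ is an affine translation, differentiating the identity $s(W+\one v^\top,h)=s(W,h)$ in $W$ then gives $\nabla_W s(W',h)=\nabla_W s(W,h)$.

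\textbf{Step 1: the gradient difference.} By the product rule,
\[
\nabla_W\Braw(W,h)=2s\,W+\lVert W\rVert_F^2\nabla_W s .
\]
Evaluating this at $W'$ and at $W$, we use the invariance of $s$ and of $\nabla_W s$ and $W'-W=\one v^\top$. Subtracting the two expressions gives $2s\one v^\top+\Delta_v(W)\nabla_W s$, which is Equation~\eqref{eq:raw-gradient-difference}.

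\textbf{Step 2: push through the output Jacobian.} Logits are $z=Wh$, so a perturbation $\dot W$ produces $\dot z=\dot W h$ and $\dot p=P(p)\dot W h$. In other words, $J_Wp\,\mathrm{vec}(M)=P(p)Mh$, and I will state this once to avoid Kronecker bookkeeping.
\begin{itemize}
\item For the first term, $P(p)\one v^\top h=(v^\top h)P(p)\one=0$, so it is function-null.
\item For the second term, compute $\nabla_W s$ by the chain rule. Since $d\lVert p\rVert_2^2=2p^\top P(p)\,dz$ and $P$ is symmetric, $\partial s/\partial z=-2\lVert h\rVert_2^2P(p)p$. Hence $\nabla_W s=-2\lVert h\rVert_2^2P(p)p\,h^\top$.
\item Applying $J_Wp$ gives $-2\lVert h\rVert_2^2P(p)^2p\,(h^\top h)$. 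Multiplying by $\Delta_v(W)$ yields Equation~\eqref{eq:raw-functional-difference}.
\end{itemize}

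\textbf{Step 3: characterize exactly when the difference is nonzero.} With $h\neq0$, the velocity difference vanishes iff $\Delta_v(W)=0$ or $P(p)^2p=0$. Because $P(p)$ is symmetric positive semidefinite, $P^2p=0$ iff $\lVert Pp\rVert^2=p^\top P^2p=0$, i.e.\ iff $Pp=0$. Componentwise, $(Pp)_i=p_i(p_i-\lVert p\rVert_2^2)$. Finite logits make every $p_i>0$, so $Pp=0$ iff all $p_i$ equal the common value $\lVert p\rVert_2^2$, i.e.\ iff $p$ is uniform. Hence the raw intervention produces different functional updates exactly when $\Delta_v(W)\neq0$ and $p$ is non-uniform.

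The step I expect to need the most care is Step 2. There I must fix consistent conventions for $\mathrm{vec}$, $J_Wp$, and the matrix-shaped gradient so that the $\lVert h\rVert_2^4$ factor appears correctly: one $\lVert h\rVert_2^2$ comes from $s$ and one from $h^\top h$. I must also confirm that the sign of $\partial\lVert p\rVert^2/\partial z$ is $+2Pp$.
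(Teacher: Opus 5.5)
Your proof is correct and follows the paper's argument step for step: the product rule on $\Braw=s\lVert W\rVert_F^2$ with gauge invariance of $s$ and $\nabla_W s$, then pushing $\nabla_W s=-2\lVert h\rVert_2^2P(p)p\,h^\top$ through $J_Wp\,\mathrm{vec}(M)=P(p)Mh$. The only cosmetic difference is in the final characterization: the paper simply cites $\ker P(p)=\operatorname{span}\{\one\}$ for finite logits, while you reduce $P(p)^2p=0$ to $P(p)p=0$ using the symmetry of $P(p)$ and read off uniformity componentwise, which makes explicit a small step the paper leaves implicit.
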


\begin{proof}
	Gauge invariance gives the same $s$, $\nabla_Ws$, and probability Jacobian at $W$ and $W'$. Since $\Braw=s\lVert W\rVert_F^2$, differentiation gives $\nabla_W\Braw=2sW+\lVert W\rVert_F^2\nabla_Ws$. Subtracting proves Equation~\eqref{eq:raw-gradient-difference}. The common-row term adds the same amount to every logit and is function-null. Finally, $\nabla_Ws=-2\lVert h\rVert_2^2P(p)p\,h^\top$, which yields Equation~\eqref{eq:raw-functional-difference}. For finite logits, $\ker P(p)=\operatorname{span}\{\one\}$.
\end{proof}

\begin{corollary}[Scalar retuning cannot generically restore identification]
	\label{cor:no-retuning}
	Let $F(\theta)$ be a differentiable model output with parameters $\theta=(W,\xi)$ and the common-row symmetry above. For any differentiable gauge-invariant $a(\theta)$, set $S(\theta)=a(\theta)\lVert W\rVert_F^2$ and $c=\lVert W\rVert_F^2$. Along one orbit its functional velocity has the form
	\begin{equation}
		\mathcal V_S=A+cB,\qquad
		A=2aJ_{F,W}\operatorname{vec}(W),\quad B=J_F\nabla_\theta a,
		\label{eq:affine-functional-velocity}
	\end{equation}
	where $A$ and $B$ are orbit-invariant. If they are linearly independent, then for representatives with $c_1\neq c_2$ no nonzero scalars $\lambda_1,\lambda_2$ satisfy $\lambda_1\mathcal V_S(\theta_1)=\lambda_2\mathcal V_S(\theta_2)$. For $S=\Braw$, $F=p$, and a single fixed-feature example with $h\neq0$, this nondegeneracy holds for $(K-1)$-dimensional Lebesgue-almost-every $p$ in the simplex interior whenever $K\geq3$.
\end{corollary}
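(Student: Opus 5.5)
The plan has three steps: expand the gradient of $S$ to get the affine form, show $A$ and $B$ are orbit-invariant, and finish with a short linear-algebra argument. The specialization to $\Braw$ then reduces to showing that two explicit vector fields on the simplex are almost everywhere linearly independent.

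\emph{Affine form and orbit invariance.} By the product rule, $\nabla_\theta S=a\,\nabla_\theta\lVert W\rVert_F^2+c\,\nabla_\theta a$, and $\nabla_W\lVert W\rVert_F^2=2W$. Applying $J_F$ gives $\mathcal V_S=2aJ_{F,W}\operatorname{vec}(W)+c\,J_F\nabla_\theta a=A+cB$. For orbit invariance, $G_v$ is a translation of $\theta$, and $F\circ G_v=F$ and $a\circ G_v=a$. Differentiating these identities shows that $J_F$, $a$ and $\nabla_\theta a$ take the same values at $\theta$ and $G_v(\theta)$, so $B$ is invariant. For $A$, write $W'=W+\one v^\top$. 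Then $J_{F,W}\operatorname{vec}(W')=J_{F,W}\operatorname{vec}(W)+J_{F,W}\operatorname{vec}(\one v^\top)$. The last term is the derivative of $t\mapsto F(G_{tv}\theta)$ at $t=0$, which is constant in $t$, so this term vanishes.

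\emph{Impossibility of scalar alignment.} Suppose $\lambda_1(A+c_1B)=\lambda_2(A+c_2B)$. Then $(\lambda_1-\lambda_2)A+(\lambda_1c_1-\lambda_2c_2)B=0$. Linear independence forces $\lambda_1=\lambda_2$ and then $\lambda_1(c_1-c_2)=0$. Since $c_1\neq c_2$, both scalars are zero, which contradicts the assumption that they are nonzero.

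\emph{Specialization to $\Braw$.} Take $\theta=W$ with $h$ fixed, $F=p$ and $a=s$; note that $s$ is gauge-invariant. Since $J_Wp\operatorname{vec}(M)=P(p)Mh$, we get
\[
A=2s\,P(p)z=2s\,P(p)\log p,
\]
because $z=\log p+\text{const}\cdot\one$ and $P(p)\one=0$. Using $\nabla_Ws$ from the proof of Proposition~\ref{prop:first-order-failure},
\[
B=-2\lVert h\rVert_2^4P(p)^2p.
\]
In the interior of the simplex $s>0$, so the claim reduces to showing that $u(p)=P(p)\log p$ and $w(p)=P(p)^2p$ are linearly independent for almost every $p$. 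The Gram determinant $\lVert u\rVert^2\lVert w\rVert^2-(u^\top w)^2$ is real-analytic on the connected open interior. Its zero set therefore has $(K-1)$-dimensional Lebesgue measure zero unless the determinant vanishes identically, so it suffices to exhibit one point where $u$ and $w$ are independent.

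The main obstacle is producing this witness for every $K\geq3$. Symmetric choices taking only two distinct values make $u$ and $w$ collinear, so those must be avoided. I would expand near the uniform point, setting $p=K^{-1}(\one+\varepsilon\eta)$ with $\eta\perp\one$. At first order in $\varepsilon$, both $u$ and $w$ are multiples of $\eta$. At second order, they pick up multiples of the centered square $\Pi(\eta\circ\eta)$, with different coefficients. Hence, after subtracting the matching multiple of the leading term, the $2\times2$ determinant of the coefficients on $\eta$ and $\Pi(\eta\circ\eta)$ is nonzero whenever $\eta$ and $\Pi(\eta\circ\eta)$ are linearly independent. Independence of $\eta$ and $\Pi(\eta\circ\eta)$ holds as soon as $\eta$ takes at least three distinct values, which is possible exactly when $K\geq3$. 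If the bookkeeping of the second-order coefficients proves messy, the fallback is a direct computation at an explicit point such as $p\propto(1,2,4,\dots,4)$, reducing the rank check to a $3\times2$ minor.
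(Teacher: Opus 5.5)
Your proof is correct. The affine decomposition, the orbit invariance of $A$ and $B$, the scalar-alignment contradiction, and the reduction to $u=P(p)\log p$, $w=P(p)^2p$ all match the paper. The only real difference is how you show that the analytic nondegeneracy function is not identically zero.

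The paper evaluates one first-two-coordinate minor at $p=(1/2,3/10,1/5)$, obtaining $\frac{9}{25000}\log(640/6561)\neq0$. It covers $K>3$ by giving the extra classes small positive mass. That step is implicitly a limit toward the simplex boundary. It works because only $p_i\log p_i$ and $p^\top\log p$ enter, and both stay continuous as $p_i\to0$, but the paper does not say this.

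Your expansion about the uniform point gives an interior witness for every $K\geq3$ at once. The coefficients you assert but did not compute do differ:
\[
u=\tfrac{\varepsilon}{K}\eta+\tfrac{\varepsilon^2}{2K}\Pi(\eta\circ\eta)+O(\varepsilon^3),\qquad
w=\tfrac{\varepsilon}{K^3}\eta+\tfrac{2\varepsilon^2}{K^3}\Pi(\eta\circ\eta)+O(\varepsilon^3).
\]
After normalizing the leading terms, the two vectors differ by $\tfrac{3\varepsilon}{2}\Pi(\eta\circ\eta)+O(\varepsilon^2)$. Independence of $\eta$ and $\Pi(\eta\circ\eta)$ holds because $\Pi(\eta\circ\eta)=\alpha\eta$ would make every entry of $\eta$ a root of the single quadratic $x^2-\alpha x-K^{-1}\lVert\eta\rVert_2^2$. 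The choice $\eta=(1,-1,0,\dots,0)$ therefore works.

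The trade-off is as follows. Your route costs a second-order computation but explains structurally why $K\geq3$ is needed: two-valued $p$, which is forced when $K=2$, makes $u$ and $w$ collinear. The paper's route is a one-line numerical check, but it needs the boundary-limit remark to reach $K>3$. Using the Gram determinant rather than one minor is an inessential difference.
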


\begin{proof}
	Because $F$ and $a$ are gauge-invariant, $J_F$, $a$, and $\nabla_\theta a$ are the same at every representative. The product rule gives $J_F\nabla_\theta S=2aJ_{F,W}\operatorname{vec}(W)+cJ_F\nabla_\theta a$. The gauge shift changes $W$ by $\one v^\top$, but $J_{F,W}\operatorname{vec}(\one v^\top)=0$. Thus both $A$ and $B$ are orbit-invariant, proving Equation~\eqref{eq:affine-functional-velocity}.

	Suppose scalar retuning aligned two representatives. Expanding the equality of their velocities gives $(\lambda_1-\lambda_2)A+(\lambda_1c_1-\lambda_2c_2)B=0$. Linear independence forces both coefficients to vanish. Hence $\lambda_1=\lambda_2$, and their nonzero common value then forces $c_1=c_2$, a contradiction. Any shared gauge-invariant primary-loss field is identical at both representatives and cancels before this comparison.

	For the raw fixed-feature case, take $F=p$ and write $z=Wh$. Since $p=\softmax(z)$, we have $z=\log p+\gamma\one$ for some scalar $\gamma$. Using $P(p)\one=0$ gives $J_{p,W}\operatorname{vec}(W)=P(p)z=P(p)\log p$. Also, $J_{p,W}\operatorname{vec}(\nabla_Ws)=-2\lVert h\rVert_2^4P(p)^2p$. Therefore $A$ and $B$ point, up to nonzero scalar factors, along $P(p)\log p$ and $P(p)^2p$. For $K=3$, their first-two-coordinate minor at $p=(1/2,3/10,1/5)$ equals $\frac{9}{25000}\log(640/6561)\neq0$. This minor is analytic in the simplex interior, so its zero set has measure zero. The set where all minors vanish is a subset of that zero set. For $K>3$, assigning sufficiently small positive mass to the additional classes preserves a nonzero minor and gives the same conclusion.
\end{proof}

The raw objective also rewards motion in the function-null common-row direction. Let $\bar w=K^{-1}W^\top\one$. Because the primary loss and $s$ are invariant to common-row shifts, their gradients have zero row mean. Taking the row mean of a gradient-ascent step on $\alpha\Braw$ therefore gives
\begin{equation}
	\bar w_{t+1}=(1+2\eta\alpha s_t)\bar w_t,
	\label{eq:mean-row-dynamics}
\end{equation}
For $\alpha>0$, $s_t>0$, and $\bar w_t\neq0$, repeated steps amplify this component even though it leaves probabilities unchanged. This is the gauge-runaway prediction tested below. Quotient and exact objectives do not contain this function-null growth term.

\paragraph{Empirical setup.}
\label{sec:experiments}

We first ask whether the defect changes realistic one-step updates, then examine how it manifests over full training trajectories. In every pair, non-classifier parameters, optimizer state, minibatches, and initial probabilities match. We calibrate the auxiliary coefficient once at the centered model and reuse it unchanged after the gauge shift. Tests cover addition and multiplication Transformers, a subtraction MLP, frozen ResNet-18 features, and end-to-end CIFAR-10 ResNet-18 checkpoints. Long-horizon experiments use fixed decision rules specified before evaluation. We draw no trajectory claim from one step.

\subsection{Standard and naturally stored gauges}
\label{sec:standard-gauges}

Any equivalent reparameterization is enough to break identification, but ordinary gauge scales determine practical relevance. In ten frozen-feature CIFAR-10 probes evaluated as stored, centering $W$ as in $\Bquot$ removes a mean-row component containing $1.93$--$2.36\%$ of $\lVert W\rVert_F^2$. In five end-to-end ResNets, no shift is applied. The raw/quotient ratio starts at $1.103$--$1.118$, and weight decay drives it toward the centered value of one (Figure~\ref{fig:standard-gauge}a).

\begin{figure}[!htbp]
	\centering
	\includegraphics[width=\linewidth]{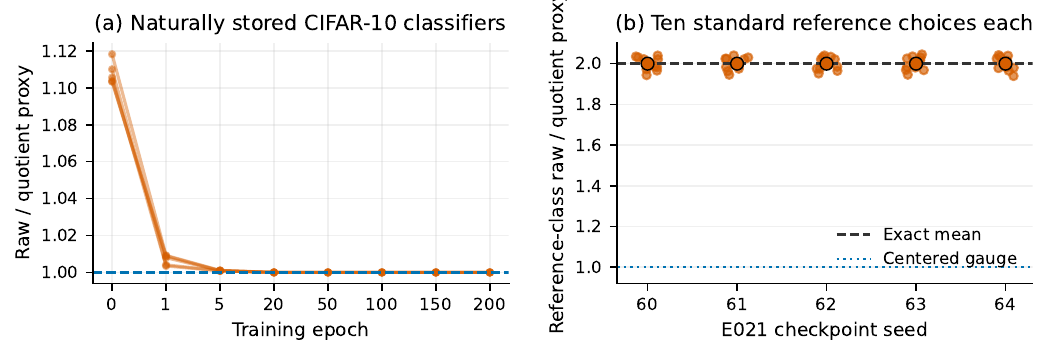}
	\caption{\textbf{Gauge scale without amplified shifts.} \textbf{(a)} With no gauge shift, weight decay drives five ordinary end-to-end classifiers toward the centered ratio of one. \textbf{(b)} Each checkpoint is reparameterized ten times by setting one class row to zero. Black marks the exact mean of two from Corollary~\ref{cor:reference-gauge}.}
	\label{fig:standard-gauge}
\end{figure}

For each of five centered checkpoints, we subtract each class row from all ten rows in turn. The chosen row becomes zero, but probabilities do not change. The 50 resulting reference-class parameterizations have raw/quotient ratios of $1.939$--$2.045$ and an exact within-checkpoint mean of two (Figure~\ref{fig:standard-gauge}b). Thus a standard equivalent representation can nearly double the proxy even when natural storage is nearly centered. Corollary~\ref{cor:no-retuning} independently rules out generic repair by gauge-specific scalar retuning.

\subsection{Equivalent predictors receive different updates}

On ten trained modular-addition Transformer checkpoints, common-row shifts make $\Braw$ $1$, $10$, or $100$ times its centered value while probabilities, $\Bquot$, and $\kexact$ stay fixed (Figure~\ref{fig:gauge}a). At $100\times$, the raw full gradient differs from the centered one by approximately $99$ times the centered-gradient norm, despite probability differences below $5.11\times10^{-15}$. Quotient and exact gradient differences remain below $6.23\times10^{-12}$ (Figure~\ref{fig:gauge}b).

\begin{figure}[!htbp]
	\centering
	\includegraphics[width=\linewidth]{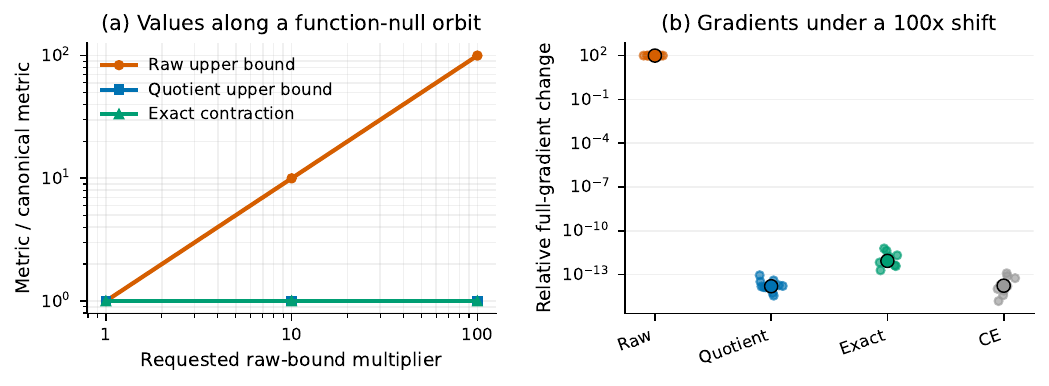}
	\caption{\textbf{Only raw changes along a function-null orbit.} Ten checkpoints (thin) and medians (emphasized). Probabilities, $\Bquot$, and $\kexact$ stay fixed.}
	\label{fig:gauge}
\end{figure}

The central empirical question is whether this symmetry defect changes the function after one matched optimizer step. Across 45 centered/100$\times$-shifted pairs, it does. We test ten each for addition Transformers, multiplication Transformers, subtraction MLPs, and frozen-feature CIFAR-10, plus five end-to-end ResNets. Initial paired probability differences are below $10^{-10}$. After one update, raw separates every pair, while quotient, available exact, and CE controls remain aligned near numerical precision (Figure~\ref{fig:local}).

\begin{figure}[!htbp]
	\centering
	\includegraphics[width=0.85\linewidth]{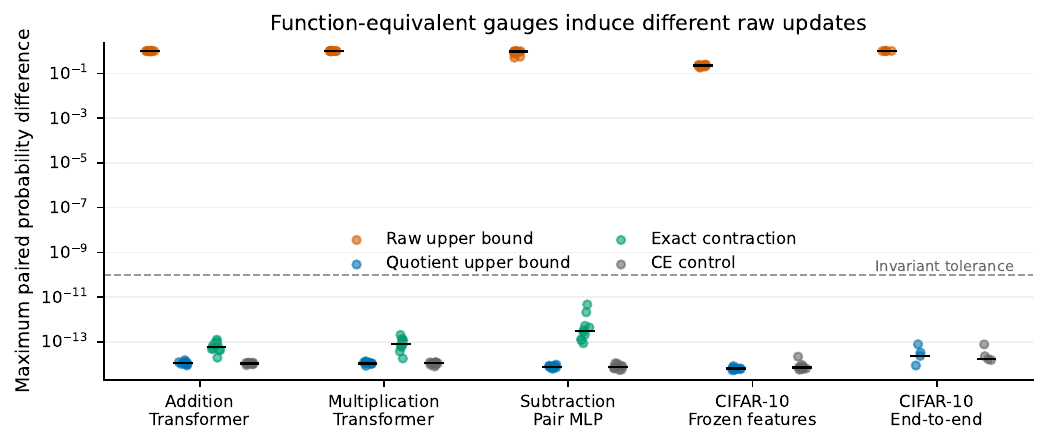}
	\caption{\textbf{One-update functional deviation.} Raw separates equivalent functions. Quotient, available exact, and CE controls remain aligned near machine precision.}
	\label{fig:local}
\end{figure}

The separation is large beyond the algorithmic models. Subtraction MLPs reach maximum paired probability differences of $0.504$--$0.984$ with $14.8$--$21.3\%$ prediction disagreement. Raw differences are $0.184$--$0.242$ in all ten frozen-feature CIFAR-10 probes and $0.990$--$1.000$ in all five end-to-end checkpoints, while the largest quotient difference across image settings is $7.87\times10^{-14}$. The effect is therefore not confined to a toy model or a frozen representation.

Quotienting repairs gauge identification, but $\Bquot$ is not tight. At the centered modular checkpoints, its value is $102.95$--$108.44$ times $\kexact$. A separate five-seed temporal null changes only the function-null coordinate used to report $\Braw$, keeping it about $100\times$ its paired control value while probabilities, centered classifiers, non-classifier parameters, optimizer states, and minibatches match. All five pairs have identical grokking onset. Thus the diagnostic value alone cannot determine generalization.

\FloatBarrier

\subsection{Long-horizon training dynamics}

Separate long-horizon CIFAR-10 runs show suppression and recovery. Before removal, raw-regularized models trail the paired CE control by $10.61$ percentage points on average (range $7.04$--$16.58$). Accuracy rises by $11.90$ points ($8.48$--$17.11$) after regularizer removal and the scheduled optimizer changes. This supports suppression and recovery, but not a consistent selective-delay pattern. Three runs memorize without reaching the required ten-point test gap, while two reach the gap without memorizing. The unchanged released loop likewise shows suppression and recovery in all three published seeds, although only one satisfies the same composite endpoint.

Modular addition is less decisive. No observed raw run shows the required 3000-step onset separation, and three raw onsets remain unobserved by step 100000. The exact-gradient-matched diagnostic does not resolve whether exact flatness is necessary. Quotient variants satisfy the composite CIFAR-10 endpoint in 1/5 and 2/5 runs, so they likewise do not resolve the stronger claim.

With the raw objective and a quotient cap that leaves the common-row component unconstrained, all five CIFAR-10 stability runs cross the runaway threshold at epoch 4 and later exceed common-row norm $10^3$.

\FloatBarrier
\section{Limitations and conclusion}

The quotient repair removes only final-softmax common-row symmetry and remains about two orders above $\kexact$. Natural raw/quotient gaps are small at final CIFAR-10 checkpoints, although standard reference-class conventions change $\Braw$ about twofold. The $100\times$ tests amplify signal deliberately. The learned-feature study is not an input-space benchmark, and long-horizon tests cover modular addition and CIFAR-10.

Within this scope, the raw proxy fails in two distinct ways. In robustness certification, empirical-risk stationarity does not remove the pointwise first-order loss term; retaining it yields a globally valid, gauge-invariant feature-space certificate. As a training objective, the raw proxy assigns different functional updates to equivalent predictors; quotienting the common-row symmetry restores identification. Long-horizon experiments show strong reversible suppression on CIFAR-10, but do not consistently isolate selective delay after memorization, and therefore do not resolve whether exact relative flatness is necessary for generalization. More broadly, validity as a geometric upper bound is not enough for downstream use. Certification requires a sound pointwise bound, while intrinsic training interventions require symmetry-consistent functional updates.\footnote{Generative AI tools assisted with experimental planning, code development, manuscript preparation, and checks of mathematical derivations and experimental results.}

\label{main-text-end}
\bibliography{references}
\bibliographystyle{plainnat}

\end{document}

%% file: tables/robustness.tex
\begin{tabular}{@{}rrrr@{}}
\toprule
$\Delta$ & Actual $>$ raw (\%) & Linear/raw & Bound/actual \\
\midrule
$10^{-5}$ & 100.00 & 1060.27 & 1.039 \\
$10^{-4}$ & 100.00 & 106.03 & 1.393 \\
$10^{-3}$ & 100.00 & 10.60 & 4.908 \\
$3\!\times\!10^{-3}$ & 100.00 & 3.53 & 12.581 \\
$10^{-2}$ & 98.34 & 1.06 & 38.012 \\
\bottomrule
\end{tabular}